\definecolor{linkblue}{HTML}{0645AD}
\theoremstyle{plain}
\newtheorem{thm}{Theorem}[section]
\theoremstyle{definition}
\theoremstyle{remark}
\definecolor{orcidlogocol}{HTML}{A6CE39}
\newcommand{\orcid}[1]{\href{https://orcid.org/#1}{\textcolor{orcidlogocol}\faOrcid}}
\newenvironment{keypoints}{
    \small
    \vspace{0.3em}
    \noindent\hspace{1.5em}\textbf{Key Points}
    \vspace{-0.4em}
    \begin{itemize}
        
        \setlength\itemsep{0.15em}
        \setlength\parskip{0pt}
        \setlength\parsep{0pt}
}{
    \end{itemize}
    \vspace{0.3em}
}
\title{Probabilistic Guarantees for Reducing Contextual Hallucinations in LLMs} 
\author{
    \begin{tabular}{cc}
        Nils Rautenberg\footnote{contact: nils.rautenberg@rub.de} & Sven Schippkus~\orcid{0000-0002-8504-6811} \\
        Deutsche Aktuarvereinigung e.V. & University of Hamburg \\
    \end{tabular}
}
\date{\today}
\begin{document}

\setstretch{1.25}

\maketitle

\begin{keypoints}
\item Independent prompt repetition makes it overwhelmingly likely that a correct answer appears.  
\item An LLM-as-a-judge identifies correct answers; majority vote improves reliability when needed.  
\item Together, these give explicit probabilistic guarantees with exponentially decreasing error rates.
\end{keypoints}

\begin{abstract}
Large language models (LLMs) frequently produce contextual hallucinations, where generated content contradicts or ignores information explicitly stated in the prompt. Such errors are particularly problematic in deterministic automation workflows, where inputs are fixed and correctness is unambiguous. We introduce a simple and model‑agnostic framework that provides \emph{explicit probabilistic guarantees} for reducing hallucinations in this setting.  

We formalize the notion of a \emph{specific task}, defined by a fixed input and a deterministic correctness criterion, and show that issuing the same prompt in independent context windows yields an exponential reduction in the probability that all model outputs are incorrect. To identify a correct answer among repeated runs, we incorporate an LLM-as-a-judge and prove that the probability that the judged pipeline fails decays at a rate determined by the judge's true- and false-positive probabilities. When the judge is imperfect, we strengthen it through majority vote over independent judge calls, obtaining ensemble-level error rates that decrease exponentially in the number of votes. This yields an explicit bound on the probability that the pipeline selects a hallucinated answer.

Experiments on controlled extraction tasks with synthetic noisy judges match these predictions exactly: pipeline failure decreases exponentially with the number of repetitions, and hallucination-selection decreases exponentially with the number of judges in the ensemble. Together, these results provide a lightweight, modular, and theoretically grounded method for driving hallucination probabilities arbitrarily low in fixed-input LLM workflows-without modifying model weights, decoding strategies, or prompt engineering.
\end{abstract}

\section{Introduction}

Large language models (LLMs) are increasingly capable of solving complex tasks, following instructions, and synthesizing information across long chains of reasoning. Yet even the most advanced models still fail in a surprisingly human way: they occasionally assert things that are not true, not implied, or directly contradicted by the information they were given. These hallucinations pose a challenge for automated pipelines that require deterministic correctness.

A growing body of work has shown that hallucinations take many forms rather than one. Comprehensive reviews \parencite{huang2025, zhang2025} map out diverse error types—from contradictions with world knowledge to logical slips or instruction confusion—revealing that hallucination is a structural property of present-day models rather than a rare glitch \parencite{kalai2025}. Summarization studies further demonstrate that even conditioned generation systems often produce unsupported claims \parencite{maynez2020, fabbri2021, kryscinski2020}, and retrieval-augmented models continue to introduce content not present in the retrieved passages \parencite{adlakha2024}. Contextual hallucinations, where the model contradicts or ignores information that is explicitly provided in the prompt, have proven particularly difficult to eliminate. Prior work attributes these failures to long‑context degradation \parencite{chang2024, kim2024, liu2025}, competition between contextual and parametric knowledge \parencite{longpre2021}, and the inherent difficulty of reliable evidence tracking in autoregressive generation.

\begin{figure}[t]
    \centering
     \makebox[0pt][c]{
    \includegraphics[width=18cm]{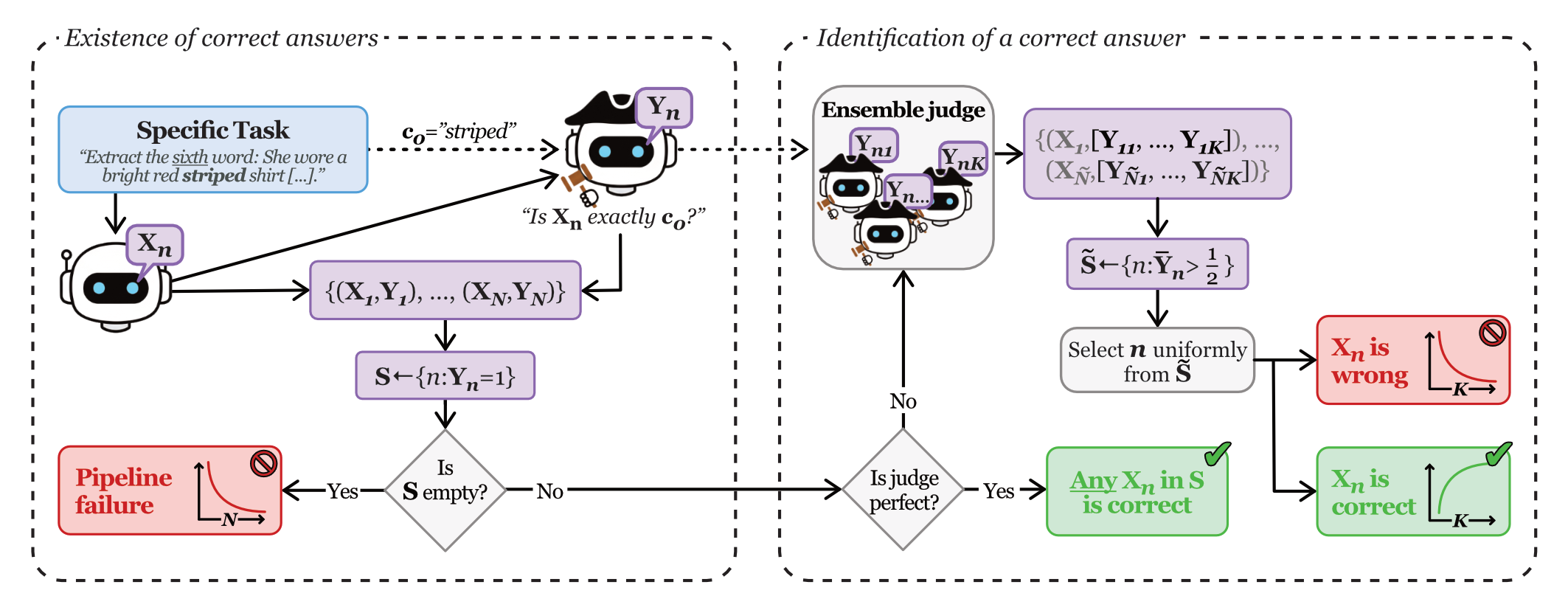}
    }
    \caption{The repetition-judge pipeline gives probabilistic guarantees for reducing contextual hallucinations. \textbf{Left:} For a given \emph{specific task}, the likelihood that all answers are hallucinated is reduced by repeating the task $N$ times and judging the output $X_n$ of each for correctness. The pipeline fails ("all answers are judged as incorrect") exponentially decreasing with $N$. \textbf{Right:} From the judged-correct answers, a true correct answer can be found at guaranteed rates. If the judge is perfect ($q^{++}=1,\, q^{-+}=0$), any $X_n$ with $Y_n=1$ is a correct answer. If the judge is noisy but better than random ($q^{-+}<0.5$), an ensemble judge selects a hallucination exponentially decreasing with the number of judges $K$.}
    \label{fig:sketch}
\end{figure}

Ongoing efforts have sought to detect and mitigate hallucinations. Techniques such as FactCC \parencite{kryscinski2020}, AlignScore \parencite{zha2023}, REFChecker \parencite{hu2024}, and VeriFastScore \parencite{rajendhran2025} attempt to identify unsupported content post‑hoc, while contrastive decoding strategies \parencite{shi2023, chang2023} and faithfulness-inspired heuristics \parencite{wan2023} aim to reduce hallucinations during inference. Multi-agent refinement and LLM-as-a-judge systems \parencite{jain2023, luo2023, cohen2023, liu2023, madaan2023, shinn2023, du2023, wang2024} combine several model calls to improve reliability, but empirical studies show that judges can themselves be inconsistent or biased in challenging cases \parencite{laban2023, kim2024}. Despite substantial progress, existing mitigation strategies do not provide explicit quantitative guarantees on the likelihood of residual hallucinations. 

In this paper, we focus on a restricted but practically important setting: a class of tasks that we call \emph{specific tasks}. A specific task is one where: (i) the input is fixed, (ii) the prompt is issued only once per run, (iii) and correctness can be evaluated by a deterministic criterion. Typical examples include slot extraction, reference lookup, formatting tasks, or locating a particular word or number in a known text. Such tasks are common in automated workflows and provide a clean setting where correctness is unambiguous. In this setting, reliability decomposes naturally into two independent questions: (i) how likely is that at least one correct answer appears among the model outputs, and (ii) how reliably a judge can identify such an answer when it exists. These two stages-existence and identification-can be improved separately through repetition and judged selection.

Two simple observations motivate our framework:

\begin{enumerate}
    \item \textbf{Repetition amplifies correctness.}  
    Even if a model sometimes answers incorrectly, running the same task several times independently, for example by issuing each call in a fresh context window, makes it very likely that at least one of the answers will be correct. This is a direct probabilistic consequence of independence, and we formalize it as the \emph{Repetition Lemma}.
    
    \item \textbf{Identification requires a judge.}  
    Repetition alone does not tell us \emph{which} of the responses is correct. Selecting an answer uniformly at random from the repeated runs yields the same correctness probability as a single run. To select a correct answer among all that appear, we need a secondary mechanism to filter out hallucinations: an \emph{LLM-as-a-judge}. Even if the judge is imperfect, its reliability can itself be improved through repetition.
\end{enumerate}

Figure~\ref{fig:sketch} illustrates the repetition–judge pipeline based on these core concepts. Repetition makes it overwhelmingly likely that at least one correct answer appears among the model outputs, and an appropriate judge–strengthened via majority vote if necessary—identifies such an answer with task‑dependent true‑ and false‑positive rates. Taken together, these components form a lightweight, model‑agnostic pipeline for achieving arbitrarily low error rates on fixed‑input tasks without modifying model weights or relying on complex decoding schemes.

The remainder of the paper develops these ideas in a simple formal setting. Section \ref{sec:existence} introduces \emph{specific tasks} and shows how independent repetitions sharply reduce the chance that all responses are wrong.  Section \ref{sec:identification} then explains how a judge can be used to identify a correct answer once repetition makes its existence likely, and how ensemble judging provides explicit bounds on hallucination-selection probability even for imperfect judges. Section \ref{sec:experiment} presents controlled experiments validating both effects, and Section \ref{sec:conclusion} concludes.

\section{Existence of correct answers}
\label{sec:existence}

At the heart of our framework lies a simple but powerful modelling assumption: independent LLM calls made in fresh context windows behave as independent samples from a fixed output distribution. This assumption is the mathematical basis for all existence guarantees in this paper. It reflects the practical reality that each model call carries its own sampling randomness and that starting from an empty, uncontaminated context eliminates cross‑run interference. Under this independence assumption, repeated executions of the same prompt provide statistically independent “attempts” at solving a fixed task.

We now formalize the setting in which our analysis applies. At this stage, we focus solely on the probability that a correct answer \emph{appears} among repeated calls to the model. The identification step ("find a correct answer") is developed separately in Section~\ref{sec:identification}, where we incorporate a judge and, when needed, majority‑vote aggregation.

A \emph{specific task} is one where the input is fixed, the prompt is issued only once per run, and correctness can be evaluated by a deterministic rule. In this setting, each call to the LLM behaves as a random draw from a fixed distribution. Running the same task independently multiple times therefore produces a sequence of independent and identically distributed random variables. This structure is all we need to derive explicit probabilistic guarantees on the likelihood that at least one correct answer appears among repeated executions.

\subsection{Specific tasks}

Let Z denote the set of possible LLM inputs and outputs. A specific task consists of:

\begin{itemize}
    \item a fixed input $A \in Z$,
    \item a fixed prompt function $B: Z \rightarrow Z$ producing the query to the LLM,
    \item a deterministic correctness function $C: Z \rightarrow Z$,
    \item and a correctness criterion $c_0 \in Z$.
\end{itemize}

One execution of the task consists of issuing the prompt $B(A)$ to the LLM in a \emph{fresh context window} and obtaining an output $X \in Z$. We say the model answers correctly if $C(X)=c_0$, and we call the response a \emph{hallucination} otherwise.

Crucially, issuing each call in a fresh context window makes the outputs $X_1, X_2, \dots$ behave as independent and identically distributed random variables. This is the only, yet powerful, structural property we assume about the model. Because the input, prompt, and correctness criterion are fixed, the per-run success probability
\[p=\mathbb{P}(C(X_n)=c_0),\]
is a stable property of the task. A task is \emph{solvable} if $p>0$.

\subsection{Repetition Lemma}

We are interested in the probability that $N$ independent model calls all fail to produce a correct answer. Since each call succeeds with probability $p$ and fails with probability $(1-p)$, independence gives
\[\mathbb{P}(C(X_1)\neq c_0,\ldots,C(X_N)\neq c_0) = (1-p)^N.\]

This yields the following simple but powerful observation.

\begin{thm}[Repetition Lemma]
\label{thm:repetition}
For any solvable specific task with per‑run correctness probability $p>0$, the probability that $N$ independent executions all produce hallucinated responses is
\[(1-p)^N.\]
This probability decreases exponentially with increasing $N$.
\end{thm}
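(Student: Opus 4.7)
The plan is to invoke the i.i.d.\ structure guaranteed by the fresh-context-window assumption and then apply the product rule for independent events. Because the theorem reduces to a one-line probabilistic identity, there is essentially no obstacle beyond unpacking definitions; the conceptual content has already been absorbed into the setup of a \emph{specific task}, where fixing $A$, $B$, and $c_0$ together with the fresh-context assumption forces the $X_n$ to be i.i.d.

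Concretely, I would first introduce the per-run failure event $F_n = \{C(X_n) \neq c_0\}$ for $n=1,\dots,N$. Since the input, prompt function, and correctness criterion are fixed across runs, each $X_n$ has the same marginal distribution, so $\mathbb{P}(F_n) = 1-p$ uniformly in $n$. The fresh-context-window assumption states that $X_1,\dots,X_N$ are independent; independence then transfers to $F_1,\dots,F_N$ because each $F_n$ is a deterministic function of $X_n$ alone (via the map $C$). The multiplication rule for independent events therefore gives
\[\mathbb{P}(F_1 \cap \cdots \cap F_N) \;=\; \prod_{n=1}^{N}\mathbb{P}(F_n) \;=\; (1-p)^N,\]
which is exactly the identity claimed in the lemma.

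For the exponential-decay statement, I would rewrite $(1-p)^N = \exp(-\lambda N)$ with rate $\lambda = -\ln(1-p)$, and observe that $\lambda > 0$ precisely because the task is solvable ($p>0$ implies $1-p<1$). The decay rate is thus determined entirely by the per-run success probability. The only edge case worth flagging is $p=1$, where the bound trivially collapses to zero for all $N\geq 1$ and $\lambda=+\infty$; this is consistent with the statement and needs no separate argument. The main "obstacle," if there is one, is purely one of emphasis: making clear that all of the probabilistic content is carried by the independence assumption, and that once that assumption is granted the rest is the standard Bernoulli calculation.
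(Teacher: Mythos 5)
Your proof is correct and follows the same route as the paper: identify the per-run failure events, invoke independence from the fresh-context assumption, and multiply. The paper's proof is a one-liner; your added remarks on the exponential rate $\lambda=-\ln(1-p)$ and the $p=1$ edge case are harmless elaborations of the same argument.
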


\begin{proof}
    Since the events $\{C(X_n)\neq c_0\}$ are independent and each has probability $(1-p)$, the joint failure probability is exactly $(1-p)^N$.
\end{proof}

Intuitively, if the model occasionally produces the correct answer, then repeatedly running the same task makes it overwhelmingly likely that a correct answer appears somewhere among the responses. The lemma provides an explicit quantitative guarantee: a small amount of repetition is sufficient to make the event ``every answer is a hallucination'' extremely unlikely. Repetition therefore provides a guarantee on the \emph{existence} of a correct answer among the repeated outputs. The problem of \emph{identifying} such an answer is addressed next.

\section{Identification of correct answers}
\label{sec:identification}

Repetition ensures that a correct answer is likely to appear among the repeated outputs. However, repetition alone does not determine \emph{which} of the responses is correct. Selecting a response uniformly at random from the repeated runs yields the same correctness probability as a single model call. To identify a correct answer among the repeated outputs, we use an LLM-as-a-judge, strengthened through majority vote if necessary. This section introduces the judge, derives the probability that the pipeline fails to identify any correct answer, and provides an explicit bound on the probability that the selected answer is a hallucination.

\subsection{Judging model}

For each model output $X_n$, we issue a separate yes/no query to a judge asking whether $X_n$ is correct with respect to the fixed criterion $C$, that is $C(X_n)=c_0$. Each query is sent in a fresh context window so that judge evaluations across different $n$ are independent. We write the judge response as a Bernoulli random variable
\[Y_n \in \{0,1\},\quad Y_n=1~\text{meaning ``the judge asserts }X_n\text{ is correct.''}\]
To characterise the judge, we define is true-positive and false-positive rates
\[q^{++}=\mathbb{P}(Y_n = 1 \mid C(X_n) = c_0),\quad q^{-+}=\mathbb{P}(Y_n = 1 \mid C(X_n) \neq c_0).\]
These quantities depend on the task and judge prompt but are fixed across repetitions. A perfect judge satisfies $q^{++}=1$ and $q^{-+}=0$. We emphasise that judge evaluation itself constitutes a specific task with fixed correctness criterion, and thus inherits the same independence assumptions as the model calls.

\subsection{Reliability of judged existence}

The judged-selection pipeline fails if, and only if, no model output is judged correct. For any given model call, two events must occur to obtain a correct-and-judged-correct output: (i) the model produces a correct answer, which happens with probability $p$, and (ii) the judge accepts it, which happens with probability $q^{++}$. Thus the per-trial probability of producing a correct-and-judged-correct answer is $pq^{++}$.

Similarly, an incorrect answer is nevertheless judged correct with probability $(1-p)q^{-+}$, producing a false-positive candidate. Although such false positives do not help the pipeline identify a correct answer, they influence the behaviour of the judged-correct set and therefore appear in the full derivation of pipeline failure. 

Putting these observations together yields the following guarantee.

\begin{thm}[Reliability of the repetition-judge pipeline]
\label{thm:pipeline-failure}
For a specific task with per‑trial correctness probability $p>0$ and judge rates $q^{++}>0\,,q^{-+}\geq0$, the probability that the pipeline fails, i.e., that no judged-correct answer appears among N repetitions, is
\[\bigl(1-(pq^{++}+(1-p)q^{-+})\bigr)^N.\]
\end{thm}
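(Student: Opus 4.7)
The plan is to reduce the claim to the same independence-plus-product argument that proves the Repetition Lemma, after first computing the per-trial probability of obtaining a judged-correct output. For each $n \in \{1,\dots,N\}$, let $E_n = \{Y_n = 1\}$ denote the event that the $n$-th output is judged correct. The pipeline fails precisely when none of $E_1,\dots,E_N$ occurs, so the target probability is $\mathbb{P}(E_1^c \cap \cdots \cap E_N^c)$. The theorem then follows once we establish (i) that $\mathbb{P}(E_n) = pq^{++}+(1-p)q^{-+}$ for every $n$, and (ii) that the events $E_n$ are mutually independent.

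For step (i) I would apply the law of total probability, conditioning on whether the model response is actually correct: $\mathbb{P}(E_n) = \mathbb{P}(Y_n=1 \mid C(X_n)=c_0)\,\mathbb{P}(C(X_n)=c_0) + \mathbb{P}(Y_n=1 \mid C(X_n)\neq c_0)\,\mathbb{P}(C(X_n)\neq c_0)$. Substituting the definitions of $p$, $q^{++}$ and $q^{-+}$ yields the desired per-trial probability, and this value does not depend on $n$ because the underlying task and judge parameters are fixed across repetitions.

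For step (ii) I would invoke the independence assumption spelled out in the setup: each model call and each judge call is issued in a fresh context window, so the pairs $(X_n, Y_n)$ are i.i.d., and in particular the events $E_n$ are independent. The failure probability then factors as $\prod_{n=1}^{N} \mathbb{P}(E_n^c) = \bigl(1-(pq^{++}+(1-p)q^{-+})\bigr)^N$, matching the statement. In effect this is the Repetition Lemma applied to the compound Bernoulli indicator \emph{"$n$-th trial is judged correct"} rather than \emph{"$n$-th trial is correct"}.

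The main obstacle is conceptual rather than computational: one must be explicit that the independence assumed for the $X_n$ extends to joint independence of the pairs $(X_n, Y_n)$, since $Y_n$ depends on $X_n$ through an additional judge call. The paper's framing, which treats judge evaluation as itself a specific task with its own fresh-context-window independence, closes this gap without extra machinery, so the bulk of the proof is really the two-line law-of-total-probability calculation in step (i) followed by a direct product over $n$.
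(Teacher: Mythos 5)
Your proof is correct and follows essentially the same route as the paper: compute the per-trial judged-correct probability $pq^{++}+(1-p)q^{-+}$ by conditioning on whether the model output is correct, then invoke independence of the pairs $(X_n,Y_n)$ across repetitions to factor the failure probability into the $N$-fold product. You simply make explicit the law-of-total-probability step and the extension of independence to the judge calls, both of which the paper's one-line proof leaves implicit.
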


\begin{proof}
    For any fixed $n$, the probability that the output is judged correct (whether correctly or incorrectly) is $pq^{++}+(1-p)q^{-+}$. Independence across repetitions implies that the probability that no judged-correct answer appears across all $N$ trials is the stated expression. This event contains, in particular, the event that no correct-and-judged-correct answer appears.
\end{proof}

Let 
\[S=\{n:Y_n=1\}\]
denote the set of responses that the single judge asserts as correct. Theorem~\ref{thm:pipeline-failure} characterizes the probability that $S=\varnothing$, i.e., that the pipeline reports failure. From now on, we condition on the natural event $S\neq\varnothing$, meaning that at least one model output was judged correct by the pipeline.

\subsection{Identification via ensemble judging}
Given the repetition-judge pipeline succeeds ($S\neq\varnothing$), we can define bounds on how likely we find a correct answer in $S$, depending on the judge quality. In the following, we investigate the case where the judge is better than random ($q^{++}>{1\over 2}$, $q^{-+}<{1\over 2}$).

For simple assessment tasks, e.g., when $c_0$ is a single word and we ask the judge ``Is $X_n$ exactly $c_0?$'', an LLM-as-a-judge is often perfect in practice, meaning $q^{++}=1$ and $q^{-+}=0$. In this case, the pipeline failure rate (Theorem~\ref{thm:pipeline-failure}) simplifies to the Repetition Lemma (Theorem~\ref{thm:repetition}) and when the pipeline succeeds ($S\neq\varnothing$), selecting any answer guarantees a correct answer.

However, if $q^{-+}$ is non-zero, a hallucinated answer may appear in the judged set $S$ and could be chosen. If the judge is better than random ($q^{-+}<{1\over 2}$), we can use an ensemble of judges and issue a majority vote among them to reduce the probability of selecting a hallucination.

For each $X_n$, let $Y_{n1},\,...\,,Y_{nK}$ be independent Bernoulli variables with
\[\mathbb{P}(Y_{nk}=1 \mid C(X_n)=c_0)=q^{++},\quad \mathbb{P}(Y_{nk}=1 \mid C(X_n)\neq c_0)=q^{-+}.\]
Thus every judge in the ensemble has the same quality as the original base judge. This is not required for the guarantees we derive in the following but it simplifies the implementation in practice, as the same LLM-as-a-judge can be re-used for all judging tasks.

The ensemble judge accepts $X_n$ if a strict majority of the $K$ votes are positive, i.e., 
\[\stackrel{\sim}{S} = \Bigg\{n:\frac{1}{K}\sum_{k=1}^KY_{nk}>{1\over 2}\Bigg\} = \bigg\{n:\overline{Y}_{n}>{1\over 2}\bigg\}.\]
We denote the ensemble true- and false-positive rates by
\[
Q^{++}(K) = \mathbb{P}\bigg(\overline{Y}_{n}>{1\over 2}\mid C(X_n)=c_0\bigg),\quad 
Q^{-+}(K) = \mathbb{P}\bigg(\overline{Y}_{n}>{1\over 2}\mid C(X_n)\neq c_0\bigg).\]

In the selection stage, the final answer is chosen uniformly from $\stackrel{\sim}{S}$. We therefore condition on the natural event $\stackrel{\sim}{S}\neq\varnothing$, which holds whenever the ensemble judge accepts at least one candidate. We now compute the probability that this chosen answer is a hallucination.
\begin{thm}[Hallucination-selection with an ensemble judge]
\label{thm:ensemble-hallucination}
Let
\begin{itemize}
    \item $p$ be the per-trial model correctness probability,
    \item $q^{++}$ and $q^{-+}$ be the true- and false-positive rates of the \emph{base} judge, and
    \item $Q^{++}(K)$ and $Q^{-+}(K)$ be the corresponding true- and false-positive rates of the \emph{ensemble} judge formed from $K$ independent votes.
\end{itemize}
Because the ensemble judge is applied only to those outputs that the base judge has first accepted, only candidates in $S=\{n:Y_n=1\}$ are forwarded to the ensemble and thus a correct model answer is accepted by the ensemble judge with probability
\[pq^{++}Q^{++}(K).\]
Similarly, an incorrect answer is accepted with probability
\[(1-p)q^{-+}Q^{-+}(K).\]
Conditioned on $\stackrel{\sim}{S}\neq\varnothing$, the probability that a uniformly chosen element of $\stackrel{\sim}{S}$ is a hallucination is
\[\mathbb{P}(C(X_n)\neq c_0\mid\,\stackrel{\sim}{S}\neq0) = \frac{(1-p)\,q^{-+}\,Q^{-+}(K)}{p\,q^{++}\,Q^{++}(K)+(1-p)\,q^{-+}\,Q^{-+}(K)}.\]
\end{thm}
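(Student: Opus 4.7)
The plan is to reduce the claim to a per-trial Bayes calculation by combining (a) conditional independence of the model call with all judge calls and (b) exchangeability of the $N$ repetitions. For the first step, I would verify the two displayed per-trial probabilities: for a single trial $n$, the model output $X_n$, the base judge response $Y_n$, and the $K$ ensemble votes $Y_{n1},\dots,Y_{nK}$ are each obtained in fresh context windows and are therefore mutually independent conditional on the correctness status $C(X_n)$. Hence the joint event that $X_n$ is correct and accepted by both the base judge and the ensemble majority factorizes as $p\,q^{++}\,Q^{++}(K)$; the analogous decomposition on the incorrect branch yields $(1-p)\,q^{-+}\,Q^{-+}(K)$.

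For the second step, let $n^\star$ be drawn uniformly from $\stackrel{\sim}{S}$ conditional on $\stackrel{\sim}{S}\neq\varnothing$, and set $H_n=\{C(X_n)\neq c_0\}$. I would write
\[
\mathbb{P}\bigl(H_{n^\star},\,\stackrel{\sim}{S}\neq\varnothing\bigr) \;=\; \sum_{n=1}^{N} \mathbb{E}\!\left[\frac{\mathbf{1}_{\{n\in\stackrel{\sim}{S}\}}\,\mathbf{1}_{H_n}}{|\stackrel{\sim}{S}|}\right],
\]
and similarly for $\neg H_{n^\star}$. The key observation is that conditional on $n\in\stackrel{\sim}{S}$, the indicator $\mathbf{1}_{H_n}$ depends only on trial $n$, whereas $|\stackrel{\sim}{S}|-1=\sum_{m\neq n}\mathbf{1}_{\{m\in\stackrel{\sim}{S}\}}$ depends only on the other trials; by i.i.d.\ independence across trials, these are conditionally independent. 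Hence the factor $\mathbb{E}[1/|\stackrel{\sim}{S}|\mid n\in\stackrel{\sim}{S}]$ does not depend on whether we further condition on $H_n$ or $\neg H_n$, and by symmetry is the same for every $n$. Taking the ratio of hallucinated-selection to correct-selection probabilities therefore cancels this common factor and leaves the per-trial odds $(1-p)\,q^{-+}\,Q^{-+}(K)\,/\,\bigl(p\,q^{++}\,Q^{++}(K)\bigr)$; normalization then produces the claimed conditional probability.

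The delicate point I expect to require care is precisely this cancellation. A careless argument would equate the desired conditional probability with $\mathbb{E}[|\stackrel{\sim}{S}\cap H|]/\mathbb{E}[|\stackrel{\sim}{S}|]$, which in general is not equal to $\mathbb{E}[|\stackrel{\sim}{S}\cap H|/|\stackrel{\sim}{S}|]$. What rescues the heuristic ratio-of-expectations here is precisely the i.i.d.\ structure of the trials, which renders the conditional distribution of $|\stackrel{\sim}{S}|$ given $n\in\stackrel{\sim}{S}$ independent of $C(X_n)$ and thereby allows the odds identity to be applied directly inside the expectation.
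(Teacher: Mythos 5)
Your proposal reaches the same result by the same overall decomposition as the paper: compute per-trial entry rates $p\,q^{++}\,Q^{++}(K)$ and $(1-p)\,q^{-+}\,Q^{-+}(K)$ via conditional independence of the model call, base judge, and ensemble votes, then normalize. Where you diverge is in rigor: the paper's proof simply asserts that "the selection probability is proportional to these entry rates, and normalisation yields the stated expression" without justifying why uniform selection from the \emph{random, possibly multi-element} set $\stackrel{\sim}{S}$ should produce probabilities proportional to per-trial entry rates. You supply exactly the missing argument. By writing $\mathbb{P}(H_{n^\star},\,\stackrel{\sim}{S}\neq\varnothing)=\sum_n\mathbb{E}\bigl[\mathbf{1}_{\{n\in\stackrel{\sim}{S}\}}\mathbf{1}_{H_n}/|\stackrel{\sim}{S}|\bigr]$, factoring $|\stackrel{\sim}{S}|=1+\sum_{m\neq n}\mathbf{1}_{\{m\in\stackrel{\sim}{S}\}}$ on $\{n\in\stackrel{\sim}{S}\}$, and invoking i.i.d. independence across trials, you show that the random normalizer contributes the same constant factor $\mathbb{E}\bigl[1/|\stackrel{\sim}{S}|\mid n\in\stackrel{\sim}{S}\bigr]$ to both the hallucinated and the correct branch — independent of $H_n$ and of $n$ — so it cancels in the ratio. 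Your cautionary remark that $\mathbb{E}\bigl[|\stackrel{\sim}{S}\cap H|/|\stackrel{\sim}{S}|\bigr]\neq\mathbb{E}\bigl[|\stackrel{\sim}{S}\cap H|\bigr]/\mathbb{E}\bigl[|\stackrel{\sim}{S}|\bigr]$ in general, and that the i.i.d. structure is precisely what rescues the proportionality heuristic here, pinpoints the subtlety the paper's one-line normalization step glosses over. In short: correct, same route, but your version closes a genuine gap in the published proof.
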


\begin{proof}
    An answer enters $\stackrel{\sim}{S}$ if and only if (i) the model produces it, (ii) the base judge marks it correct, and (iii) the majority of the ensemble judges mark it as correct.
    A correct answer therefore enters $\stackrel{\sim}{S}$ with probability
    \[p \times q^{++} \times Q^{++}(K),\]
    and a hallucinated answer enters with probability
    \[(1-p) \times q^{-+} \times Q^{-+}(K).\]
    Conditioning on $\stackrel{\sim}{S}\neq\varnothing$ and selecting uniformly from $\stackrel{\sim}{S}$, the selection probability is proportional to these entry rates, and normalisation yields the stated expression.
\end{proof}

\subsection{Ensemble judge accuracy}
The ensemble false-positive rate can be written in closed form as
\[Q^{-+}(K)=\sum_{i=\lceil K/2 \rceil}^K \binom{K}{i} (q^{-+})^i(1-q^{-+})^{K-i},\]
and similarly for $Q^{++}(K)$. Standard Hoeffding bounds give, for any $q^{-+}<\frac{1}{2}$,
\[Q^{-+}(K)\leq e^{-2K\big(\frac{1}{2}-q^{-+}\big)^2},\]
showing exponential decay with the number of judges K. Substituting these rates into Theorem~\ref{thm:ensemble-hallucination} yields an exponentially decreasing hallucination-selection probability.

\subsection{Algorithmic formulation}

The following procedure summarizes the repetition--judge pipeline for a perfect judge.

\begin{algorithm}[h!]
\caption{Repetition--Judge Pipeline (Perfect-Judge Case)}
\label{alg:pipeline}
\begin{algorithmic}[1]
\Require Specific task with fixed input $A$ and correctness criterion $C$; number of repetitions $N$
\Ensure A selected answer or \textsc{Failure}
\For{$n = 1 \dots N$}
    \State $X_n \gets$ LLM query with task prompt
    \State $Y_n \gets$ judge query in a fresh context window:
        ``Is $X_n$ exactly correct?''
\EndFor
\State $S \gets \{n : Y_n = 1\}$ \Comment{All answers judged correct by the (perfect) judge}
\If{$S = \varnothing$}
    \State \Return \textsc{Failure}
\Else
    \State Choose $n$ uniformly random from $S$
    \State \Return $X_n$
\EndIf
\end{algorithmic}
\end{algorithm}

\section{Experiments}
\label{sec:experiment}

We conduct controlled experiments to validate the two quantitative guarantees established in Theorem~\ref{thm:pipeline-failure} (``How likely are all answers hallucinated?'') and Theorem~\ref{thm:ensemble-hallucination} (``How likely will an ensemble judge select a hallucinated answer?'') using three fixed-input extraction tasks (Appendix~\ref{app:tasks}). All model outputs are generated using \verb|Qwen3‑4B‑Instruct‑2507| \parencite{yang2025}. We provide the implementation online \parencite{schippkus2026}.

Each task requires the model to return a specific word from a short prompt, making correctness unambiguous. For each task we estimate the model's per-trial correctness probability $p$ from $10,000$ independent samples. To control judge quality, we use a synthetic judge that independently flips the assigned labels $Y_n$ with probability 0.25, which yields the base judge quality rates $q^{++}\approx0.75$ and $q^{-+}\approx0.25$. For ensemble judging, we generate $K\in\{1,3,\dots,17\}$ independent judge evaluations per output and apply a majority vote.

We execute the full pipeline $10,000$ times and record (i) whether the pipeline fails (the event $S=\varnothing$) and (ii) whether a hallucination is selected from the ensemble-judged set $\stackrel{\sim}{S}$. Across all tasks, empirical pipeline failure decreases with $N$ exactly as predicted by Theorem~\ref{thm:pipeline-failure} (Figure~\ref{fig:experiments}a--c). Similarly, hallucination-selection rates are shown in Figures~\ref{fig:experiments}d--f and decrease with $K$ exactly as predicted by Theorem~\ref{thm:ensemble-hallucination}.

\begin{figure*}[t]
    \centering
    \includegraphics[width=\linewidth]{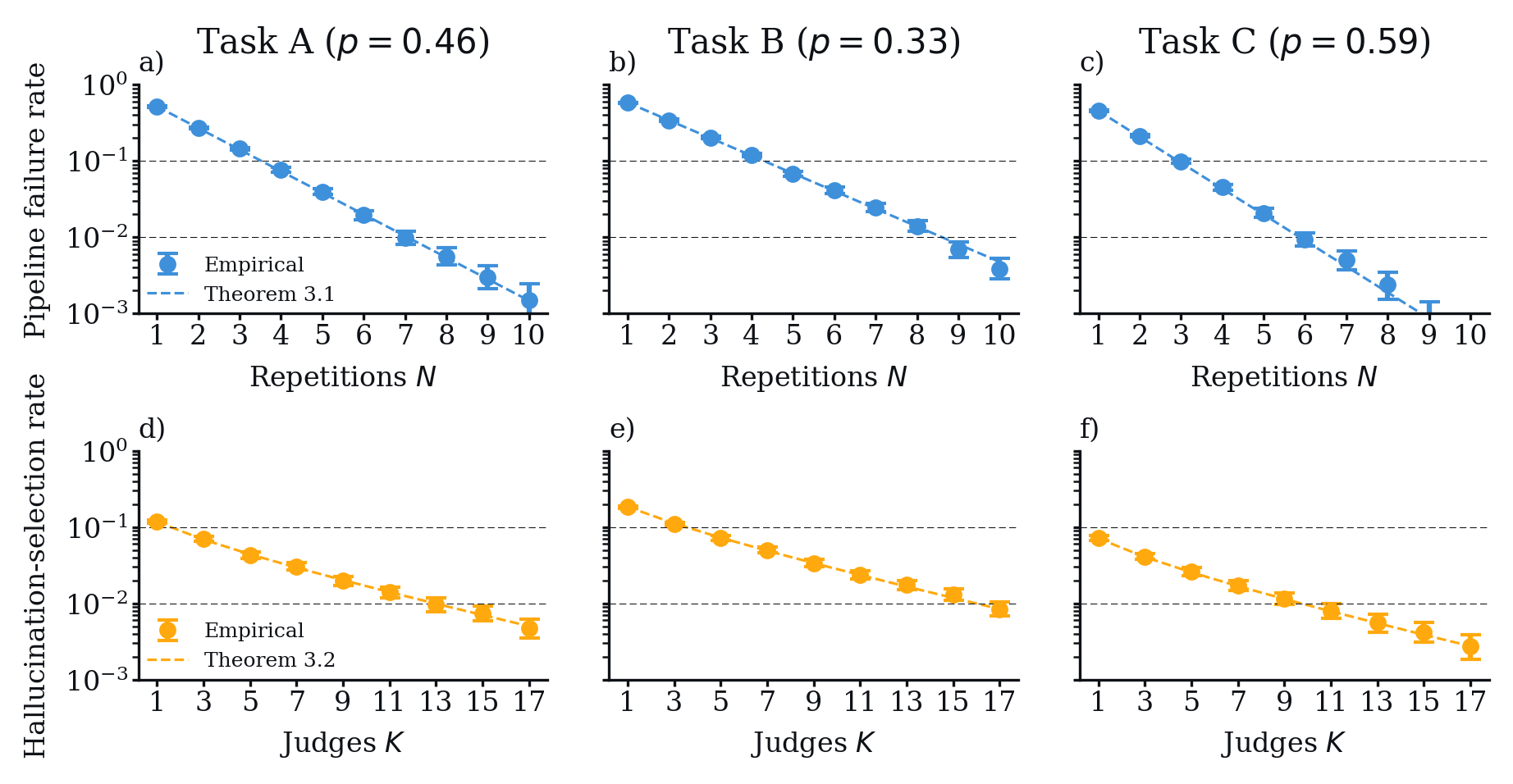}
    \caption{Empirical performance of the repetition--judge pipeline and ensemble judge, estimated over $10,000$ pipeline runs. We present three example tasks with different $p$ and use an imperfect judge ($q^{++}=0.75,\,q^{-+}=0.25$).
    \textbf{a--c)} Pipeline failure rate ("all answers are judged as incorrect") decreases exponentially with the number of repetitions $N$. Theorem~\ref{thm:pipeline-failure} predicts the observed decay, indicated by the dashed line.
    \textbf{d--f)} Hallucination-selection rate ("the ensemble judge selects a hallucinated answer") decreases exponentially with ensemble size $K$. Theorem~\ref{thm:ensemble-hallucination} predicts the observed decay, indicated by the dashed line. All error bars indicate Wilson confidence intervals. Together, these demonstrate that the repetition-judge pipeline lays the foundation for reducing contextual hallucinations to desired rates, even for imperfect judges.}
    \label{fig:experiments}
\end{figure*}

\section{Conclusion}
\label{sec:conclusion}

This paper presents a probabilistic framework that provides explicit reliability guarantees for fixed‑input LLM workflows. We focus on \emph{specific tasks}, where the prompt and correctness criterion are fixed, and show that reliability separates naturally into two independent components: ensuring that a correct answer \emph{appears}, and reliably \emph{identifying} it. We show how under the assumption of independence between context windows both can be solved.

For existence, independent repetitions of the same prompt in fresh context windows reduce the probability that all model outputs are incorrect at an exponential rate (Theorem~\ref{thm:repetition}). For identification, we incorporate an LLM‑as‑a‑judge. The reliability of the judged pipeline depends only on the judge’s true‑ and false‑positive rates (Theorem~\ref{thm:pipeline-failure}). When the judge is imperfect, majority voting over independent judge calls yields ensemble‑level error rates that also decrease exponentially (Theorem~\ref{thm:ensemble-hallucination}). These two components—repetition and judged selection—give explicit end‑to‑end guarantees with two independent, tunable parameters: the number of model repetitions $N$ and the number of judge calls $K$.

Experiments on controlled extraction tasks with synthetic noisy judges show that empirical behavior closely matches the theoretical predictions: pipeline failure decreases exponentially with $N$ and hallucination‑selection decreases exponentially with $K$. This demonstrates that highly reliable fixed‑input pipelines can be achieved without changing model weights, prompts, or decoding strategies. It crucially also demonstrates that our assumption of independence is appropriate, at least for the simple tasks investigated here. We encourage developing this concept further under less strict requirements.

Open directions include analyzing judge robustness across domains, optimizing the computational trade‑off between $N$ and $K$, and extending the framework to tasks with graded correctness or longer multi‑step workflows. Nonetheless, the results establish that any specific task with non‑zero success probability can, in principle, be solved to arbitrarily high reliability by combining independent repetitions with judged selection, providing a simple and theoretically grounded foundation for dependable agentic LLM systems

\subsection*{Acknowledgements}
This research was partially funded by the Federal Ministry of Education and Research (BMBF) and the Free and Hanseatic City of Hamburg under the Excellence Strategy of the Federal Government and the Länder.

\subsection*{Author contributions (CRediT)}
Conceptualization: NR, SS;
Formal analysis: NR; SS;
Funding Acquisition: SS;
Investigation: SS;
Methodology: NR, SS;
Resources: SS;
Software: SS;
Visualization: SS;
Writing – original draft: NR, SS;
Writing – review \& editing: NR, SS.

\printbibliography

\appendix
\section{Task prompts}
\label{app:tasks}

\begin{table*}[h!]
    \centering
    \begin{tabularx}{\textwidth}{l|X|l|l|l}
    &\textbf{\{task\}} & \textbf{\{target\}} & $c_0$ & \(\boldsymbol{p}\) \\ \hline
    A
    &Sentence one: She wore a bright red striped shirt to the party.
    
    Sentence two: It was very stylish.
    &sixth
    &striped
    &0.46
    \\ \hline
    B
    &Sentence one: Information about the new project was sent to all staff members.
    
    Sentence two: Please check your email.
    &seventh
    &sent
    &0.33
    \\ \hline
    C
    &Sentence one: Those three small round silver coins were found in the dirt.
    
    Sentence two: They looked like ancient Roman money.
    &eighth
    &found
    &0.59
    \\
    \end{tabularx}
    \caption{Three example tasks with \textbf{\{task\}} and \textbf{\{target\}} interpolated in the prompt template. For each task, the ground truth $c_0$ is returned with a probability of \(\boldsymbol{p}\), estimated from $10,000$ model runs.}
    \label{tab:experiment_tasks}
\end{table*}

We use the following prompt template into which we interpolate \{task\} and \{target\} as listed in Table~\ref{tab:experiment_tasks}. The resulting prompts are the three \emph{specific tasks} A, B, and C we investigate empirically (Figure~\ref{fig:experiments}). Prompt template:
\begin{quote}
``Below are two sentences.

Your task: From the *first* sentence, find the \{target\} word and respond with
only that word.

Note: The first sentence has many words, including some that are distractors. The key word is near the middle but might be obscured by distractors or complex phrasing. If the answer is clear, give the {target} word. If you  are unsure or find it too confusing, say "I don't know".

Here are the sentences:

\{task\}

Respond only with the \{target\} word of the first sentence, or "I don't know"
if you're unsure.''
\end{quote}

\end{document}